\documentclass[final,11pt]{article}
\usepackage{amsfonts}
\usepackage{amssymb}
\usepackage{amsmath}
\usepackage{amsthm}
\usepackage{natbib}
\usepackage{setspace}
\usepackage{fancyhdr}
\usepackage{overpic}
\usepackage{subfloat}
\usepackage{titlesec}
\usepackage{color}
\usepackage[paperwidth=8.5in,left=1in,right=1in,top=1in,bottom=1in,paperheight=11in]{geometry}
\usepackage{subcaption}
\usepackage{rotating} 
\usepackage{url}
\usepackage{booktabs}
\usepackage{epstopdf} 
\usepackage{bm}
\usepackage{algorithm, algorithmic}

\usepackage[colorlinks=true, citecolor=black, linkcolor=darkblue, pagebackref=false,breaklinks=true]{hyperref}
\usepackage{titlesec}
\usepackage{color}
\usepackage{varioref}

\definecolor{darkblue}{rgb}{0,0,0.55}
\definecolor{darkred}{rgb}{0.5,0,0}
\newcommand\Bheadfont{\fontsize{14pt}{\baselineskip}\selectfont}

\definecolor{darkblue}{rgb}{0,0,0.55}
\definecolor{darkred}{rgb}{0.5,0,0}

\titleformat{\section}[hang] {\normalfont\sc\color{darkblue}\Bheadfont} {\thesection\hskip0.618em}{0em}{}
\titlespacing*{\section}
{0pt}{15pt plus 2pt minus 2pt}{9pt plus 2pt minus 2pt}
\titleformat{\subsection}[runin]
{\normalfont\sc\color{darkblue}} {\thesubsection\hskip0.618em}{0em}{}
\titlespacing*{\subsection}
{0pt}{13pt plus 2pt minus 2pt}{13pt plus 2pt minus 2pt}
\titleformat{\subsubsection}[runin]
{\normalfont\sc\color{darkblue}} {\thesubsubsection\hskip0.618em}{0em}{}
\titlespacing*{\subsubsection}
{0pt}{13pt plus 2pt minus 2pt}{13pt plus 2pt minus 2pt}

\theoremstyle{plain} 
\theoremstyle{plain} \newtheorem{proposition}{Proposition}
\theoremstyle{remark} \newtheorem{remark}[proposition]{Remark}
\theoremstyle{plain} \newtheorem{lemma}[proposition]{Lemma}
\theoremstyle{plain} \newtheorem{theorem}[proposition]{Theorem}
\theoremstyle{plain} 
\theoremstyle{plain} 
\theoremstyle{plain} 

\newcommand{\assumptionname}{Assumption}
\newcounter{ass}

\labelformat{ass}{\assumptionname~#1}

\newcommand{\examplename}{Example}
\newcounter{ex}

\labelformat{ex}{\examplename~#1}

\newcommand{\definitionname}{Definition}
\newcounter{define}

\labelformat{define}{\definitionname~#1}

\newcommand{\lemmaname}{Lemma}
\newcounter{lem}

\labelformat{lem}{\lemmaname~#1}

\newcommand{\propositionname}{Proposition}
\newcounter{prop}

\labelformat{prop}{\propositionname~#1}

\newcommand{\theoremname}{Theorem}
\newcounter{thm}

\labelformat{thm}{\theoremname~#1}

\newcommand{\conjecturename}{Conjecture}
\newcounter{conj}

\labelformat{conj}{\conjecturename~#1}

\begin{document}
\title{\vspace*{.5in}{\sc{\Large{Model-Free Inference of Investor Preferences: A Relative Entropy IRL Approach}}}}
\author{Chen Xu\thanks{Department of Engineering, Shenzhen MSU-BIT University. Address: 1 International University Park Road, Longgang District, Shenzhen, Guangdong, China. Email: \texttt{xuchen@smbu.edu.cn}.}}
\maketitle

\vspace{0.5in}
\begin{abstract}
We present a framework using Relative Entropy Inverse Reinforcement Learning (RE-IRL) to recover investor reward functions from observed investment actions and market conditions. Unlike traditional IRL algorithms, RE-IRL is employed to account for environments where transition probabilities are unknown or inaccessible. To address the challenge of data sparsity, we utilize a $K$-nearest neighbor approach to estimate the observed behavior policy. Furthermore, we propose a statistical testing framework to evaluate the validity and robustness of the estimated results.

	\bigskip
	
	\noindent Keywords: Inverse Reinforcement Learning, Firm Characteristics, Risk Exposures\\
\end{abstract}

\bgroup

\let\footnoterule\relax 

\egroup
\vspace{1in} 
\phantom{s} 

\pagebreak 
\onehalfspacing
\section{Introduction}
Reinforcement Learning (RL) provides a computational framework for agents to discover optimal policies that maximize cumulative rewards within iterative environments \cite{SuttonBarto2018}. In contrast, Inverse Reinforcement Learning (IRL) seeks to solve the inverse problem: uncovering the latent reward function that rationalizes the observed actions and states of an agent—information that is frequently inaccessible in practice.

The application of IRL is particularly compelling in the field of asset pricing, where explaining investor behavior remains a central challenge. Traditional asset pricing models typically assume a specific functional form for investor utility, such as constant relative risk aversion (CRRA), to derive equilibrium prices \cite{lucas1978asset, mehra1985equity}. However, these rigid assumptions often fail to capture the complexities of real-world decision-making, leading to well-known paradoxes like the equity premium puzzle. Uncovering the reward function through IRL is critical because it allows for a data-driven recovery of investor preferences without the need for restrictive, a priori functional assumptions. By identifying what investors truly optimize—whether it be risk-adjusted returns, momentum, or downside protection—researchers can better understand the drivers of market liquidity, risk premia, and the mechanisms behind price formation.

Standard IRL approaches, such as Maximum Entropy IRL \cite{Ziebart2008IRL}, require explicit knowledge of state-transition probabilities $P(s'|s, a)$. In financial markets, these transitions represent the underlying dynamics of the economy, which are notoriously complex, non-stationary, and effectively unobservable. To circumvent this, we employ Relative Entropy Inverse Reinforcement Learning (RE-IRL) \cite{boularias2011relative}, a model-free approach that does not require a defined transition model. Despite its advantages, applying RE-IRL to financial data presents unique hurdles, most notably data sparsity and the difficulty of sampling trajectories from simulated policies in a high-noise environment. Our study addresses these issues by utilizing a $K$-nearest neighbor method (e.g., \citep{biau2015lectures}) for estimating observed policies and a robust trajectory sampling methodology. In this study, we focus primarily on the methodological framework, leaving extensive empirical application for future work. 

Section \ref{sec-model-setting} defines the Markov Decision Process (MDP) settings adapted to the investor decision problem. Section \ref{sec-methodology} details the RE-IRL algorithm and its practical adaptation for reward recovery. Section \ref{sec-stat-test} introduces our proposed statistical testing method, and Section \ref{conclude} concludes.

\section{The Model Settings}
\label{sec-model-setting}
Our model is set in a dynamic environment with a Markov decision structure. Suppose there is an aggregate institutional investor who makes a sequence of investment decisions $\{a_t\}_{t=0}^{H}$ over a specified investment horizon $H$. Each decision $a_t$ represents the change in the investor's proportional position on the underlying security, defined as $a_t = (p_t - p_{t-1}) / p_t$, where $p_t$ is the position (e.g., dollar value or number of shares) held at time $t$. We assume the investor's decision process follows a Markovian structure. Specifically, $a_t$ is determined solely by the current state of the system, $\bm{s}_t$, and is independent of the past history $\{\bm{s}_0, \dots, \bm{s}_{t-1}\}$. The state is represented by the $K$-dimensional feature vector $\bm{s}_t = (s_{t1}, s_{t2}, \dots, s_{tK}) \in \mathbb{R}^K$, which captures all relevant information on the security observed during the period $[t-1, t)$. The components of $\bm{s}_t$ include key financial data such as the security's return, characteristics of the underlying firm, and the market return realized in that period. We refer to $s_{tk}$ as the $k^{\text{th}}$ feature.

We assume the state transition kernel (or distribution law) of the next state $\bm{s}_{t+1}$ is determined entirely by the current state $\bm{s}_t$ and the aggregate investor's decision $a_t$, which we denote as $p(\bm{s}_{t+1} |\bm{s}_t,a_t)$. This assumption is common and necessary when modeling dynamic investment problems using the framework of a Markov Decision Process (MDP), which requires the Markov property for the state evolution: the future state $\bm{s}_{t+1}$ is conditionally independent of past states $\{\bm{s}_0, \dots, \bm{s}_{t-1}\}$ given the present state $\bm{s}_t$ and action $a_t$. Furthermore, the inclusion of the action $a_t$ in the transition kernel is justified by the assumption that the investor is an aggregate institutional investor. This implies their trading volume is non-negligible relative to the total market volume. Therefore, the decision $a_t$ is assumed to create a liquidity or price impact that significantly affects the observed price, return, and other market characteristics comprising the subsequent state $\bm{s}_{t+1}$.

\subsection{The Policy $\pi$}
We model an investment strategy $\pi(\cdot |\bm{s})$ as a probability distribution over the action space $\mathcal{A} \subset \mathbb{R}$, conditioned on the state $\bm{s}$ in which the action is taken. Given the initial state $\bm{s}_0$, let $\tau=\{(\bm{s}_t,a_t)\}_{t=0}^H$ be the trajectory generated by following $\pi$. Then, given the state transitional probabilities $\{p(\bm{s}'|\bm{s,a})\}_{\bm{a}\in\mathcal{A},\bm{s,s}'\in\mathcal{S}}$, the probability distribution of $\tau$ is determined by $\pi$, and we denoted it by $P_{\pi}$. 

Denote the policy held by the aggregate institutional investor by $\pi^*$. As is standard in the field of Maximum Entropy Inverse Reinforcement Learning (MaxEnt IRL, \cite{boularias2011relative, Ziebart2008IRL}), we assume that $P_{\pi^*}$ is the $\tau$-distribution $P$ with the largest entropy (corresponding to the most uncertain policy), subject to the constraint that the expected feature counts (or expected states) under $P$ match those observed in the empirical demonstrations (trajectories). Then, it is shown in our Subsection \ref{solution-expression} that $P_{\pi^*}(\tau)\propto e^{\sum_{k=1}^K\theta^*_k s_k^\tau}$, where $\propto$ denotes the symbol of proportional to, $\bm{\theta}^*=[\theta_1^*,\theta_2^*,...,\theta_K^*]$ is some vector in $\mathbb{R}^K$, and $s_k^{\tau}=\sum_{t=0}^H \gamma^t s_{tk}$, namely the $k^{th}$ feature count, is the discounted sum of the $k^{th}$ feature in $\tau$, where $\gamma\in [0,1]$ is the discount factor.

\subsection{Reward Function} Another core assumption in RE-IRL is that the probability of an expert trajectory is exponentially related to its total reward. That is, $P_{\pi^*}(\tau) \propto e^{R(\tau)}$, where $R(\tau):=\sum_{t=1}^H \gamma^t R(\bm{s}_t)$. Combining this and the above exponential proportionality of $P_{\pi^*}\propto e^{\sum_{k=1}^K\theta^*_k s_k^\tau}$ implies that the total reward must equal the weighted feature sum:
$$R(\tau) = \sum_{k=1}^K \theta^*_k s_k^{\tau}.$$
This equality, when expanded, forces the instantaneous reward function $R(\bm{s})$ of the aggregate institutional investor to be a linear combination of features $\{s_k\}$:
\begin{equation}
\label{reward}
 R_{\bm{\theta}^*}(\bm{s}) = (\bm{\theta}^*)'  \bm{s}=\theta^*_1 s_1+\theta^*_2 s_2+\cdots+\theta^*_Ks_K,
\end{equation}
where $\bm{\theta}^*=[\theta^*_1,\theta^*_2,...,\theta^*_K]\in\mathbb{R}^K$ and $\bm{s}=[s_1,s_2,...,s_K]$.

Although it is often assumed in existing researches that the investor's goal (reward) is to maximize the return of the \textit{next period}, it is unlikely to be true in practice, especially for long-term investors. Our model takes care of this issue.

\subsection{Required Data}
Assume that there are in total $N$ observed trajectories from the aggregate investor. The trajectory for the $n^{th}$ stock is formally denoted as:
$$\tau_n = (\bm{s}^{(\tau_n)}_0,a^{(\tau_n)}_0,\bm{s}^{(\tau_n)}_1,a^{(\tau_n)}_1,...,\bm{s}^{(\tau_n)}_{H_n}),$$
where $\bm{s}^{(\tau_n)}_t=[s^{(\tau_n)}_{t1},...,s^{(\tau_n)}_{tK}]\in\mathbb{R}^K$ represents the state, $a^{(\tau_n)}_t$ is the action (i.e., change of holding ratio) taken by the institutional investors, $n$ indexes the $n^{th}$ stock, and $t$ indexes the quarterly time steps over the observed horizon $H_n$.
We assume that each $\tau_n$ is generated by following the aggregate instituional investor's policy $\pi_{\bm{\theta}^*}$ from a given state $\bm{s}_0$.

\subsection{Goal}
Our goal is to estimate the value of $\bm{\theta}^*$ in Eq. (\ref{reward}) from the observed trajectories $\{\tau_n\}_{n=1}^N$, under the assumption that these data are generated by the aggregate institutional investor's strategy $\pi^*$. This value reveals the structure of the institutional investor's decision function and, critically, which features are prioritized or weighted in their investment strategy.

\section{Methodology}
\label{sec-methodology}
The following method is for estimating $\bm{\theta}^*$ from a subset $\{\tau_n\}$ of same length $H$ of the observed trajectories. We denote the number of trajectories in this subset as $N$.
\subsection{Overview}
We use the relative entropy inverse reinforcement learning algorithm (RE-IRL, \cite{boularias2011relative}) to estimate $\bm{\theta}^*$ in the reward function (\ref{reward}) from the observed trajectories. We choose this algorithm over other IRL algorithms since it does not require the knowledge on the transition probabilities, which are unobservable for our case. The outline of the RE-IRL algorithm is as follows:
\begin{itemize}
\item Assume that under the investor's policy $\pi^*$, the trajectory distribution $P_{\pi^*}$ has the maximum entropy among those which match the observed trajectories. This leads to the initial objective (\ref{initial-obj}) stated in Subsection \ref{sec-init-objective}.
$$\min_{P} \sum_{\tau\in\mathcal{T}}P(\tau)\ln\frac{P(\tau)}{Q(\tau)},$$
subject to
$$
\left\{
\begin{array}{c}
|\hat{s}_k - \mathbb{E}_{\tau\sim P}[s_k^\tau] | \leq \epsilon_k, \quad k\in\{1,2,...,K\};\\
\sum_{\tau\in\mathcal{T}} P(\tau) = 1;\\
P(\tau)>0,\text{ for all }\tau\in\mathcal{T}.\\
\end{array}
\right.
$$

\item In Subsection \ref{solution-expression}, we show that that the solution to the objective (\ref{initial-obj}), which is also the trajectory distribution under the investor's policy, can be expressed as (\ref{solution}): 
\begin{equation*}
P^*(\tau):= \frac{Q(\tau)e^{\sum_{k=1}^K\theta^*_k s_k^\tau}}{Z(\bm{\theta}^*)},\quad\text{for any }\tau\in\mathcal{T},
\end{equation*}
with $Z(\bm{\theta})=\sum_{\tau\in\mathcal{T}}Q(\tau)e^{\sum_{k=1}^K\theta_k s_k^{\tau}}$ and $\bm{\theta}^*$ being some vector in $\mathbb{R}^K$.
\begin{remark}
Based on this expression, RE-IRL assumes that $\sum_{k=1}^K\theta_k^* s_k^{\tau}$ in (\ref{solution}) is the reward of trajectory $\tau$. Hence, $\theta_k^*$ is the weight of the $k^{th}$ state feature $s_k$ on investor's reward. Therefore, if we know the value of $\bm{\theta}^*$, we know the weight of each state feature on investor's decision. Hence, the next task is to estimate the value of $\bm{\theta}^*$.
\end{remark}
\item In Subsection \ref{sec-surrogate-objective}, we show that $\bm{\theta}^*$ is the solution to (\ref{surrogate-obj}):
\begin{equation*}
\max_{\bm{\theta} \in\mathbb{R}^K} g(\bm{\theta}):=\sum_{k=1}^K
\theta_k\hat{s}_k -\ln Z(\bm{\theta}) - \sum_{k=1}^K|\theta_k|\epsilon_k,
\end{equation*}
where $\hat{s}_k$ is computed from the observed trajectories $\{\tau_n\}_{n=1}^N$ generated from the institutional investor's policy $\pi^*$.
\item In Section \ref{algo}, we state the algorithm for estimating $\bm{\theta}^*$ from the empirical data, which is a stochastic gradient ascent algorithm for solving (\ref{surrogate-obj}). More specifically, the updating equation for the $k^{th}$ entry of $\bm{\theta}$ is
$$ \theta_k \leftarrow \theta_k + \alpha \hat{\nabla}_{\theta_k}g(\bm{\theta}),  \quad  \hat{\nabla}_{\theta_k}g(\bm{\theta}):= \hat{s}_k - \frac{1}{N}\sum_{\tau\in\mathcal{T}_N}\frac{\pi_{Q}(\tau)}{\pi^*(\tau)}\frac{e^{\sum_{j=1}^K\theta_js_j^\tau}}{Z(\bm{\theta})}s_k^{\tau} - a_k\epsilon_k,$$
where $Z(\bm{\theta})=\sum_{\tau\in\mathcal{T}}Q(\tau)e^{\sum_{k=1}^K\theta_k s_k^{\tau}}$ is as defined in (\ref{surrogate-obj}), $\pi_Q$ is the uniform policy that generates the uniform probability $Q$ of the trajectory $\tau$, and $\pi^*$ is estimated using the observed trajectories in $\mathcal{T}_N$.
\end{itemize}

\subsection{Initial Objective}
\label{sec-init-objective}
Let $\tau=\{(\bm{s}_t,a_t)\}_{t=0}^H$ denote a trajectory generated by following a policy $\pi_{\bm{\theta}}$ from the given initial state $\bm{s}_0$. Under our model, the probability distribution of $\tau$ following the aggregate institutional investor's policy, $P_{\pi^*}$, is the $\tau$-distribution with the largest entropy and constrained to match the empirical feature discount. That is, $P_{\pi^*}$ is the solution to the following problem:
\begin{equation}
\label{initial-obj}
\min_{P} \sum_{\tau\in\mathcal{T}}P(\tau)\ln\frac{P(\tau)}{Q(\tau)},
\end{equation}
subject to
$$
\left\{
\begin{array}{c}
|\hat{s}_k - \mathbb{E}_{\tau\sim P}[s_k^\tau] | \leq \epsilon_k, \quad k\in\{1,2,...,K\};\\
\sum_{\tau\in\mathcal{T}} P(\tau) = 1;\\
P(\tau)> 0,\text{ for all }\tau\in\mathcal{T},\\
\end{array}
\right.
$$
where 
\begin{itemize}
\item $s_k^{\tau}:=\sum_{t=0}^{H} s_{t,k}$ and $s_{t,k}$ denotes the $k^{th}$ entry of $\bm{s}_t\in\mathbb{R}^K$ in the trajectory $\tau$. $s_k^{\tau}$ is called the $k^{th}$ \textit{\textbf{feature count}} of $\tau$.
\item $\hat{s}_k:=\sum_{n=1}^N s_k^{\tau_n}/N$ is the sample mean of the $k^{th}$ feature count, where $N$ denotes the number of observed trajectories of length $H$, and $s_k^{\tau_n}:=\sum_{t=0}^{H} s^{\tau_n}_{t,k}\in\mathbb{R}$ denotes the sum of the $k^{th}$ feature in the $n^{th}$ trajectory. Here, $s^{\tau_n}_{t,k}$ denotes the $k^{th}$ entry of $\bm{s}^{\tau_n}_t$, and $\bm{s}^{\tau_n}_t\in\mathbb{R}^K$ is the state vector in the $t^{th}$ time step in the observed trajectory $\tau_n$; 

\item the search space for $P$ is the space of $\tau$-distributions;
\item $\tau\sim P$ denotes that the trajectory $\tau$ follows the distribution of $P$.
\item $\epsilon_k$ controls how close the expected feature count $\mathbb{E}_{\tau\sim P}[s_k^\tau]$ matches the observed data $\hat{s}_k$. It is pre-selected such that, for any trajectory distribution $P$, if you resample another $N$ trajectories from $P$ and recompute $\hat{s}_k$ with them, then the first condition $|\hat{s}_k - \mathbb{E}_{\tau\sim P}[s_k^\tau] | \leq \epsilon_k$ holds with probability $1-\delta$, where $\delta>0$ is also pre-selected. This property is guaranteed by the Holfding's Inequality. Specifically, $\epsilon_k$ is pre-selected to be
\begin{equation}
\label{epsilon}
\epsilon_k = \sqrt{ \frac{-\ln(1 - \delta)}{2N} } \cdot \frac{\gamma^{H+1} - 1}{\gamma - 1} \cdot \left( u_k - l_k \right),
\end{equation}
where $u_k$ and $l_k$ denote the upper bound and lower bound of $s_k^\tau$, respectively. With this value of $\epsilon_k$, from Hoeffding's Inequality, 
$$ \text{P}(|\hat{s}_k - \mathbb{E}_{\tau\sim P}[s_k^\tau] | \geq \epsilon_k)<\delta. $$
\end{itemize}

The optimization problem (\ref{initial-obj}) is intractable if $\mathcal{T}$ contains an infinite number of elements. Hence, RE-IRL assumes that space $\mathcal{T}$ of all trajectories is a finite set. For our problem, we discretize the continuous state and action space to ensure that this assumption is satisfied.

\subsection{Expression of Solution $P_{\pi^*}$ to the Initial Objective (\ref{initial-obj})}
\label{solution-expression}
In this subsection, we prove that the solution $P_{\pi^*}$ to the initial objective (\ref{initial-obj}) can be expressed as:
\begin{equation}
\label{solution}
P_{\bm{\theta}^*}(\tau):= \frac{Q(\tau)e^{\sum_{k=1}^K\theta^*_k s_k^\tau}}{Z(\bm{\theta}^*)},\quad\text{for any }\tau\in\mathcal{T},
\end{equation}
with $Z(\bm{\theta})=\sum_{\tau\in\mathcal{T}}Q(\tau)e^{\sum_{k=1}^K\theta_k s_k^{\tau}}$ and $\bm{\theta}^*=[\theta_1^*, \theta_2^*,..., \theta_K^*]$ being some vector in $\mathbb{R}^K$. Specifically, $\bm{\theta}^*=\bm{\lambda}^*-\bm{\nu}^*$, where $(\bm{\lambda}^*,\bm{\nu}^*,\eta^*)$ is the solution to the optimization problem (\ref{dual}). The proof is given below.

Denote $\mathcal{T}:=\{\tau_1,\tau_2,...,\tau_L\}$. For each trajectory $\tau_l\in\mathcal{T}$, denote $P(\tau_l)$ by $x_{l}$. Then, the initial objective (\ref{initial-obj}) can be equivalently written as 
\begin{equation}
\label{alternative-obj}
\min_{\bm{x}\in\mathbb{R}^L,\bm{x}\succ 0} f(\bm{x}):=\sum_{l=1}^Lx_l\ln\frac{x_l}{Q(\tau_l)},
\end{equation}
subject to
$$
\left\{
\begin{array}{c}
|h_k(\bm{x})| \leq \epsilon_k, \quad \text{where } h_k(\bm{x}):=\hat{s}_k - \sum_{l=1}^L x_ls_k^{\tau_l} \text{ and } k\in\{1,2,...,K\};\\
\sum_{l=1}^Lx_l=1.
\end{array}
\right.
$$
The condition $P(\tau)>0$ is removed because the solution to the surrogate problem (\ref{alternative-obj}) automatically satisfies this condition. To remove the absolute value sign in the constraints, we equivalently rewrite the problem as
\begin{equation}
\min_{\bm{x}\in\mathbb{R}^L,\bm{x}\succ 0} f(\bm{x}),\quad\text{subject to }
\left\{
\begin{array}{c}
h_k(\bm{x}) - \epsilon_k \leq 0;\\
-h_k(\bm{x}) - \epsilon_k \leq 0;\\
\sum_{l=1}^Lx_l=1.
\end{array}
\right.
\quad k\in\{1,2,...,K\}.
\end{equation}
To solve it, we define the Lagrangian function $\mathcal{L}$ as
\begin{equation}
\label{lagrangian}
\mathcal{L}(\bm{x},\bm{\lambda},\bm{\nu},\eta):=f(\bm{x})+\sum_{k=1}^K \lambda_k(h_k(\bm{x})-\epsilon_k)+\sum_{k=1}^K\nu_k(-h_k(\bm{x})-\epsilon_k)+\eta (\sum_{l=1}^Lx_l-1).
\end{equation}
Then, the dual problem is
\begin{equation}
\label{dual}
\max_{\bm{\lambda},\bm{\nu}\in\mathbb{R}^K, \bm{\lambda}, \bm{\nu}\succeq 0, \eta\in\mathbb{R}} \{\inf_{\bm{x}\in\mathbb{R}^L,\bm{x}\succ 0} \mathcal{L}(\bm{x},\bm{\lambda},\bm{\nu},\eta)\},
\end{equation}
where $\succeq$ is the element-wise greater-than symbol. From the knowledge on Lagrangian multipliers (e.g., \cite[Chapter 5]{BoydVandenberghe2004}), the solution $\bm{x}^*$ to the primal problem (\ref{initial-obj}) and the solution $(\bm{\lambda}^*,\bm{\nu}^*,\eta^*)$ to the dual problem of (\ref{dual}) satisfies the Karush-Kuhn-Tucker conditions: for each $k\in\{1,2,...,K\}$,
\begin{equation}
\label{KKT}
\left\{
\begin{array}{c}
\lambda_k,\nu_k\geq 0\\
h_k(\bm{x})-\epsilon_k\leq 0\\
-h_k(\bm{x})-\epsilon_k\leq 0\\
\lambda_k(h_k(\bm{x})-\epsilon_k) = 0\\
\nu_k(-h_k(\bm{x})-\epsilon_k) = 0\\
\sum_{l=1}^L x_l = 1\\
\nabla_{\bm{x}} \mathcal{L}(\bm{x},\bm{\lambda},\bm{\nu},\eta)=0
\end{array}
\right.
\end{equation}
Define $\theta^*_k=\lambda^*_k-\nu^*_k$. Then, the KKT conditions imply that 
\begin{equation}
\label{theta-eq}
|\theta^*_k|=\nu^*_k+\lambda^*_k,
\end{equation}
whose proof will be given later. Then,
\begin{align*}
\mathcal{L}(\bm{x},\bm{\lambda}^*,\bm{\nu}^*,\bm{\eta}^*) &= f(\bm{x}) + \sum_{k=1}^K(\lambda^*_k-\nu^*_k)h_k(\bm{x}) - \sum_{k=1}^K(\nu^*_k+\lambda^*_k)\epsilon_k + \eta^*(\sum_{l=1}^Lx_l-1)\\
&=^{\text{by }(\ref{theta-eq})}  f(x) + \sum_{k=1}^K\theta_k^*h_k(\bm{x}) - \sum_{k=1}^K|\theta_k^*|\epsilon_k + \eta^*(\sum_{l=1}^Lx_l-1)\\
&=\mathcal{L}_1(\bm{x},\bm{\theta}^*,\eta^*),
\end{align*}
where for any $\bm{\theta}\in\mathbb{R}^K$ and $\eta\in\mathbb{R}$,
\begin{equation}
\label{L-1}
\mathcal{L}_1(\bm{x},\bm{\theta},\eta):=f(\bm{x})+ \sum_{k=1}^K\theta_kh_k(\bm{x}) - \sum_{k=1}^K|\theta_k|\epsilon_k + \eta(\sum_{l=1}^Lx_l-1).
\end{equation}
Also, for any $l\in\{1,2,...,L\}$,
\begin{align*}
0&=^{\text{by (\ref{KKT})}}\nabla_{x_l}\mathcal{L}(\bm{x}^*,\bm{\lambda}^*,\bm{\nu}^*,\eta^*) \\
&=\nabla_{x_l}f(\bm{x}^*) + \sum_{k=1}^K(\lambda^*_k-\nu^*_k)\nabla_{x_l}h_k(\bm{x}^*)+\eta^* \\
&=\nabla_{x_l} f(\bm{x}^*) + \sum_{k=1}^K\theta_k^*\nabla_{x_l}h_k(\bm{x}^*) +\eta^*\\
&=\nabla_{x_l} \mathcal{L}_1(\bm{x}^*,\bm{\theta}^*,\eta^*).
\end{align*}
Therefore, we have
\begin{equation}
\label{KKT-derived}
\left\{
\begin{array}{c}
\sum_{l=1}^L x^*_l = 1\\
\nabla_{\bm{x}} \mathcal{L}_1(\bm{x}^*,\bm{\theta}^*,\eta^*)=0
\end{array}
\right.
\end{equation}
Plugging the definitions $f(\bm{x})=\sum_{l=1}^Lx_l\ln\frac{x_l}{Q(\tau_l)}$ and $h_k(\bm{x}):=\hat{s}_k - \sum_{l=1}^L x_ls_k^{\tau_l}$ to the second equality above gives, for each $l\in\{1,2,...,L\}$,
$$ 0=\nabla_{x_l} \mathcal{L}_{1}(\bm{x}^*,\bm{\theta}^*,\eta^*) = (\ln\frac{x^*_l}{Q(\tau_l)}+1) - \sum_{k=1}^K\theta^*_k s_k^{\tau_l} +\eta^*, $$
which implies that
\begin{equation}
\label{intermediate-x-star}
x^*_l=Q(\tau_l)e^{\sum_{k=1}^K\theta^*_k s_k^{\tau_l}-1-\eta^*}.
\end{equation} 
Combining this result with $\sum_{l=1}^L x^*_l = 1$ gives
\begin{equation}
\label{eta-star}
e^{1+\eta^*} = Z(\bm{\theta}^*),
\end{equation}
where $Z(\bm{\theta})$ is defined as in (\ref{surrogate-obj}). Plugging this result to the expression of $\bm{x}^*$ in (\ref{intermediate-x-star}) gives
\begin{equation} 
\label{x-star}
x^*_l(\bm{\theta}^*)=Q(\tau_l)e^{\sum_{k=1}^K\theta^*_k s_k^{\tau_l}}/Z(\bm{\theta}^*).
\end{equation}

\noindent Finally, we prove (\ref{theta-eq}) based on the KKT condition (\ref{KKT}). Recall that $\theta^*_k:=\lambda^*_k-\nu^*_k$.
\begin{enumerate}
    \item If $\theta_k^* > 0$, then $\lambda_k^* > v_k^*$. This implies $\lambda_k^* > 0$ since both of them are positive, which further implies $h_k(\bm{x}^*) - \epsilon_k = 0$ since $\lambda_k^*(h_k(\bm{x}^*) - \epsilon_k) = 0$ from (\ref{KKT}). Hence, $-h_k(\bm{x}^*) - \epsilon_k \neq 0$, which implies that $v_k^* = 0$ since $v_k^*(-h_k(\bm{x}^*) - \epsilon_k) = 0$ from (\ref{KKT}). Therefore, $\theta_k^* = \lambda_k^*$ and $|\theta_k| = \lambda_k^* + v_k^*$.

    \item If $\theta_k^* < 0$, then $v_k^* > 0$, which further implies $-h_k(\bm{x}^*) - \epsilon_k = 0$ since $v_k^*(-h_k(\bm{x}^*) - \epsilon_k) = 0$. Hence, $h_k(\bm{x}^*) - \epsilon_k \neq 0$, which implies $\lambda_k^* = 0$ since $\lambda_k^*(h_k(\bm{x}^*) - \epsilon_k) = 0$ from (\ref{KKT}). Hence, $\theta_k^* = -v_k^*$ and $|\theta_k| = \lambda_k^* + v_k^*$.

    \item If $\theta_k^* = 0$, then $\lambda_k^* = v_k^* = 0$. Hence, $|\theta_k| = \lambda_k^* + v_k^*$.
\end{enumerate}
This finishes proving (\ref{theta-eq}), and in turn completes the proof for (\ref{solution}).

\subsection{Surrogate Objective}
\label{sec-surrogate-objective} Eq. (\ref{solution}) gives an expression for the solution $P_{\pi^*}$ to (\ref{initial-obj}) in terms of $\bm{\theta}^*$. This subsection provides a clue for estimating $\bm{\theta}^*$ from the data. Specifically, we prove that, $\bm{\theta}^*$ is the solution to the surrogate objective:
\begin{equation}
\label{surrogate-obj}
\max_{\bm{\theta} \in\mathbb{R}^K} g(\bm{\theta}):=\sum_{k=1}^K
\theta_k\hat{s}_k -\ln Z(\bm{\theta}) - \sum_{k=1}^K|\theta_k|\epsilon_k,
\end{equation}
where $Z(\bm{\theta})=\sum_{\tau\in\mathcal{T}}Q(\tau)e^{\sum_{k=1}^K\theta_k s_k^{\tau}}$. This result is formally given in Theorem \ref{main-thm}
\begin{lemma}
\label{assump2}
For any $\bm{\theta}\in\mathbb{R}^K$, if we define $x_l(\bm{\theta})=Q(\tau_l)e^{\sum_{k=1}^K\theta_k s_k^{\tau_l}}/Z(\bm{\theta})$, where $Z(\bm{\theta})=\sum_{l=1}^LQ(\tau_l)$ $e^{\sum_{k=1}^K\theta_k s_k^{\tau_l}}$, and assume that $\eta(\bm{\theta})=\ln Z(\bm{\theta})-1$ (same formula as (\ref{eta-star}) and (\ref{x-star})), then
\begin{equation}
\bm{x}(\bm{\theta})=\arg\inf_{\bm{x}\in\mathbb{R}^L,\bm{x}\succ 0}\mathcal{L}_1(\bm{x},\bm{\theta},\eta(\bm{\theta})),
\end{equation} 
where $\mathcal{L}_1$ is defined in (\ref{L-1}).
\end{lemma}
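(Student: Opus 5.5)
The plan is to exploit the fact that, for fixed $\bm{\theta}$ and $\eta$, the function $\bm{x}\mapsto\mathcal{L}_1(\bm{x},\bm{\theta},\eta)$ is separable across coordinates and strictly convex on the open orthant $\{\bm{x}\succ 0\}$. Consequently, a stationary point is automatically the unique global minimizer. Expanding the definition (\ref{L-1}), we have
$$\mathcal{L}_1(\bm{x},\bm{\theta},\eta)=\sum_{l=1}^L\Big[x_l\ln\frac{x_l}{Q(\tau_l)}-x_l\sum_{k=1}^K\theta_k s_k^{\tau_l}+\eta x_l\Big]+\sum_{k=1}^K\theta_k\hat{s}_k-\sum_{k=1}^K|\theta_k|\epsilon_k-\eta .$$
The last three terms do not depend on $\bm{x}$. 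The minimization over $\bm{x}\succ 0$ therefore splits into $L$ independent one-dimensional problems of the form $\min_{x_l>0}\phi_l(x_l)$, where
$$\phi_l(x)=x\ln\frac{x}{Q(\tau_l)}-c_l x,\qquad c_l:=\sum_{k=1}^K\theta_k s_k^{\tau_l}-\eta .$$

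Next I will analyze each $\phi_l$, assuming $Q(\tau_l)>0$ (which holds since $Q$ is uniform). Its second derivative is $\phi_l''(x)=1/x>0$ on $(0,\infty)$, so $\phi_l$ is strictly convex. Its first derivative is $\phi_l'(x)=\ln\frac{x}{Q(\tau_l)}+1-c_l$. This derivative is continuous and strictly increasing, tends to $-\infty$ as $x\downarrow 0$, and tends to $+\infty$ as $x\to\infty$. Hence $\phi_l$ has exactly one zero of $\phi_l'$, namely
$$x_l=Q(\tau_l)e^{c_l-1}=Q(\tau_l)e^{\sum_k\theta_k s_k^{\tau_l}-1-\eta}.$$
By strict convexity this point is the unique global minimizer of $\phi_l$ on $(0,\infty)$. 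In particular the infimum is attained in the interior, so the open constraint $\bm{x}\succ 0$ causes no difficulty. This existence and uniqueness on an open set is the only point that needs any care.

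Finally, I will substitute $\eta=\eta(\bm{\theta})=\ln Z(\bm{\theta})-1$. This gives $e^{-1-\eta}=1/Z(\bm{\theta})$, so the minimizer is $x_l=Q(\tau_l)e^{\sum_k\theta_k s_k^{\tau_l}}/Z(\bm{\theta})=x_l(\bm{\theta})$ for every $l$. Combining the $L$ coordinatewise minimizers, $\bm{x}(\bm{\theta})$ is the unique minimizer of $\mathcal{L}_1(\cdot,\bm{\theta},\eta(\bm{\theta}))$ over $\bm{x}\succ 0$. This is the claim, and it mirrors the computation leading to (\ref{intermediate-x-star})--(\ref{x-star}), now carried out for an arbitrary $\bm{\theta}$. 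As a side remark, the choice of $\eta(\bm{\theta})$ also makes $\sum_l x_l(\bm{\theta})=1$, although this normalization is not needed for the argmin statement itself.
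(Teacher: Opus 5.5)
Your proof is correct and follows essentially the same route as the paper, which also uses the diagonal Hessian with entries $1/x_l>0$ to get strict convexity (so any stationary point is the unique global minimizer) and then solves $\nabla_{x_l}\mathcal{L}_1=0$ with $\eta=\eta(\bm{\theta})$ to obtain $x_l(\bm{\theta})$. Your coordinatewise splitting and the check that $\phi_l'$ goes from $-\infty$ to $+\infty$ only make the existence of the interior minimizer more explicit.
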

\begin{proof}
Recall that
\begin{equation*}
\mathcal{L}_1(\bm{x},\bm{\theta},\eta):=f(\bm{x})+ \sum_{k=1}^K\theta_kh_k(\bm{x}) - \sum_{k=1}^K|\theta_k|\epsilon_k + \eta(\sum_{l=1}^Lx_l-1),
\end{equation*}
where $f(\bm{x})=\sum_{l=1}^L x_l \ln (x_l/Q(\tau_l))$. Then, $\nabla^2_{x_l} \mathcal{L}_1(\bm{x},\bm{\theta},\eta) = 1/x_{l}$. Therefore, the Hessian matrix  $\nabla^2_{\bm{x}} \mathcal{L}_1(\bm{x},\bm{\theta},\eta)$ is definite positive, which further implies that $\mathcal{L}_1(\bm{x},\bm{\theta},\eta)$ is strictly convex in its $\bm{x}$-domain. Therefore, if $\mathcal{L}_1(\bm{x},\bm{\theta},\eta(\bm{\theta}))$ has stationary point, it must be its unique global minimum point as well. Below, we find its stationary point.
\begin{equation*}
\nabla_{x_l}\mathcal{L}_1(\bm{x},\bm{\theta},\eta(\bm{\theta})):= \ln (x_l/Q(\tau_l))+1-\sum_{k=1}^K\theta_ks_{k}^{\tau_l} + \eta(\bm{\theta}).
\end{equation*}
Setting it zero gives 
$$x_l = Q(\tau_l) e^{\sum_{k=1}^K\theta_ks_k^{\tau_l}}/e^{1+\eta(\bm{\theta})} = Q(\tau_l) e^{\sum_{k=1}^K\theta_ks_k^{\tau_l}}/Z(\bm{\theta}).$$
This finishes the proof.
\end{proof}

\begin{theorem}
\label{main-thm}
Let $(\bm{\lambda}^*,\bm{\nu}^*)$ be the solution to the dual problem (\ref{dual}), and define $\bm{\theta}^*$ such that $\theta_k^*=\lambda_k^*-\nu_k^*$. Then, $\bm{\theta}^*$ is the solution to (\ref{surrogate-obj}).
\end{theorem}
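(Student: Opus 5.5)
We will show that $g$ is exactly the dual function of (\ref{initial-obj}) after $(\bm{\lambda},\bm{\nu},\eta)$ are collapsed into $\bm{\theta}$. We then use that $\bm{\theta}^*$ maximizes this dual.

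\textbf{Step 1: evaluate $\mathcal{L}_1$ at its minimizer.} Fix $\bm{\theta}\in\mathbb{R}^K$ and let $\eta(\bm{\theta})=\ln Z(\bm{\theta})-1$. By Lemma \ref{assump2}, the infimum of $\mathcal{L}_1(\cdot,\bm{\theta},\eta(\bm{\theta}))$ is attained at $\bm{x}(\bm{\theta})$. At that point $\ln(x_l/Q(\tau_l))=\sum_k\theta_k s_k^{\tau_l}-\ln Z(\bm{\theta})$ and $\sum_l x_l=1$. Substituting into (\ref{L-1}), the $\sum_l x_l\sum_k\theta_k s_k^{\tau_l}$ terms from $f$ and from $\sum_k\theta_k h_k$ cancel, and the $\eta$-term vanishes. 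This gives
\[
\inf_{\bm{x}\succ 0}\mathcal{L}_1(\bm{x},\bm{\theta},\eta(\bm{\theta}))=\sum_k\theta_k\hat{s}_k-\ln Z(\bm{\theta})-\sum_k|\theta_k|\epsilon_k=g(\bm{\theta}).
\]
For a general $\eta$, the same computation gives the Lagrangian dual value $-\eta-e^{-1-\eta}Z(\bm{\theta})+\sum_k\theta_k\hat{s}_k-\sum_k(\lambda_k+\nu_k)\epsilon_k$. This expression is maximized over $\eta$ exactly at $\eta(\bm{\theta})$.

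\textbf{Step 2: compare with the dual (\ref{dual}).} For any feasible $(\bm{\lambda},\bm{\nu})\succeq 0$ with $\bm{\lambda}-\bm{\nu}=\bm{\theta}$ we have $\lambda_k+\nu_k\ge|\theta_k|$. Hence the dual objective $d(\bm{\lambda},\bm{\nu},\eta)$ satisfies
\[
d(\bm{\lambda},\bm{\nu},\eta)\le \sup_{\eta'} d(\bm{\lambda},\bm{\nu},\eta')\le g(\bm{\theta}).
\]
Equality holds when $\lambda_k=\max(\theta_k,0)$ and $\nu_k=\max(-\theta_k,0)$. So $\sup_{\bm{\theta}} g(\bm{\theta})$ equals the optimal dual value: every dual point is dominated by $g$ at $\bm{\lambda}-\bm{\nu}$, and every $g(\bm{\theta})$ is attained by a dual point.

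\textbf{Step 3: conclude optimality of $\bm{\theta}^*$.} Combining (\ref{theta-eq}) with (\ref{eta-star}), the dual optimum $(\bm{\lambda}^*,\bm{\nu}^*,\eta^*)$ satisfies $\lambda^*_k+\nu^*_k=|\theta^*_k|$ and $\eta^*=\eta(\bm{\theta}^*)$. Therefore
\[
g(\bm{\theta}^*)=d(\bm{\lambda}^*,\bm{\nu}^*,\eta^*)=\max d=\sup_{\bm{\theta}}g,
\]
so $\bm{\theta}^*$ maximizes (\ref{surrogate-obj}). If uniqueness is wanted, $g$ is concave: $\ln Z$ is a log-sum-exp composed with a linear map, hence convex, and the remaining terms are linear or concave. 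Uniqueness then holds under strict convexity of $\ln Z$, i.e.\ when the feature-count vectors $\{\bm{s}^{\tau_l}\}$ are affinely non-degenerate. We note this as an assumption rather than something the statement requires.

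\textbf{Main obstacle.} The delicate point is the inequality in Step 2. Maximizing the dual in $(\bm{\lambda},\bm{\nu})$ separately could a priori give more than $g$. It cannot, because $\lambda_k+\nu_k\ge|\lambda_k-\nu_k|$ and the $\epsilon_k$-term enters with a negative sign. We also need the existence of a dual optimum (strong duality via Slater, i.e.\ a strictly feasible $P\succ 0$) to justify using (\ref{theta-eq}) and (\ref{eta-star}); we take this from the KKT discussion preceding the theorem.
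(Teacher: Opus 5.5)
Your proof is correct and is essentially the paper's argument. Your Step 1 is the paper's Point 2, and your Steps 2--3 reproduce its Point 1: embed $g(\bm\theta)$ as a dual value via $\lambda_k=\max(\theta_k,0)$, $\nu_k=\max(-\theta_k,0)$, $\eta=\eta(\bm\theta)$, invoke dual optimality, and use (\ref{theta-eq}) and (\ref{eta-star}) to identify the dual optimum with $g(\bm\theta^*)$. The one thing you add is the domination bound $d(\bm\lambda,\bm\nu,\eta)\le g(\bm\lambda-\bm\nu)$ from your explicit dual function, and it already gives $g(\bm\theta^*)\ge d(\bm\lambda^*,\bm\nu^*,\eta^*)=\max d\ge g(\bm\theta)$ for every $\bm\theta$, so you can drop the appeal to (\ref{theta-eq}), (\ref{eta-star}) and Slater in Step 3 entirely.
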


\subsubsection{Proof Scheme} Let $\bm{\theta}$, $x(\bm{\theta})$, $Z(\bm{\theta})$, and $\eta(\bm{\theta})$ be defined as in Lemma \ref{assump2}. With Lemma \ref{assump2}, we prove the theorem in the following two steps. 
\begin{enumerate}
\item $\bm{\theta}^* = \arg\max_{\bm{\theta}\in\mathbb{R}^K} \mathcal{L}_1(x(\bm{\theta}),\bm{\theta},\eta(\bm{\theta}))$.
\item $\mathcal{L}_1 (\bm{x}(\bm{\theta}),\bm{\theta},\eta(\bm{\theta}))=g(\bm{\theta})$, where $g(\bm{\theta)}$ is specified in (\ref{surrogate-obj}).
\end{enumerate}
\subsubsection{Proof for Point 1.}
For any $\bm{\theta}\in\mathbb{R}^K$,
\begin{align*}
\mathcal{L}_1(\bm{x}(\bm{\theta}), \bm{\theta}, \eta(\bm{\theta})) &= \inf_{\bm{x}\succ 0} \mathcal{L}_1(\bm{x}, \bm{\theta}, \eta(\bm{\theta})), \quad \text{by Lemma \ref{assump2},}\\
&= \inf_{\bm{x}\succ 0} \left\{ f(\bm{x}) + \sum_{k=1}^K \theta_k h_k(\bm{x}) - \sum_{k=1}^m |\bm{\theta}_k| \epsilon_k + \eta(\bm{\theta}) (\sum_{l=1}^L x_l-1) \right\},\quad \text{from (\ref{L-1})}.
\end{align*}
Define $ \lambda_k := \frac{\theta_k + |\theta_k|}{2}$ and $\nu_k := \frac{|\theta_k| - \theta_k}{2}$,
\[
= \inf_{\bm{x}\succ 0} \left\{ f(\bm{x}) + \sum_{k=1}^K (\lambda_k-v_k) h_k(\bm{x}) - \sum_{i=k}^m (\lambda_k + v_k) \epsilon_k +\eta(\bm{\theta}) (\sum_{l=1}^L x_l-1) \right\}
\]
Since $(\bm{\lambda}^*, \bm{v}^*, \eta^*)$ is the solution to (\ref{dual}),
\begin{align*}
&\leq \inf_{\bm{x}\succ 0} \left\{ f(\bm{x}) + \sum_{k=1}^K (\lambda_k^*-v_k^*) h_k(\bm{x}) - \sum_{k=1}^K (\lambda_k^* + k_i^*) \epsilon_k + \eta^* (\sum_{l=1}^L x_l-1) \right\}\\
&= \inf_{\bm{x}\succ 0} \mathcal{L}_1(\bm{x}, \bm{\theta}^*, \eta^*), \quad \text{by }(\ref{theta-eq}), \\
&= \inf_{\bm{x}\succ 0} \mathcal{L}_1(\bm{x}, \bm{\theta}^*, \eta(\bm{\theta}^*)), \quad \text{by } (\ref{eta-star}),\\
&= \mathcal{L}_1(x(\bm{\theta}^*), \bm{\theta}^*, \eta(\bm{\theta}^*)), \quad \text{by Lemma \ref{assump2}.}
\end{align*}
This finishes the proof for Point 1.
\subsubsection{Proof for Point 2.}
\begin{equation*}
\begin{split}
\mathcal{L}_1(\bm{x}(\bm{\theta}),\bm{\theta},\eta(\bm{\theta})) = &f(\bm{x}(\bm{\theta})) + \sum_{k=1}^K\theta_kh_k(\bm{x}(\bm{\theta})) - \sum_{k=1}^K|\theta_k|\epsilon_k+\eta(\bm{\theta})(\sum_{l=1}^Lx_l(\bm{\theta})-1)\\
= &f(\bm{x}(\bm{\theta})) + \sum_{k=1}^K\theta_kh_k(\bm{x}(\bm{\theta})) - \sum_{k=1}^K|\theta_k|\epsilon_k, \quad\text{since }\sum_{l=1}^Lx_l(\bm{\theta})=1 \text{ by def of }x_l(\bm{\theta}),
\end{split}
\end{equation*}
Recall that $f(x)=\sum_{l=1}^Lx_l\ln\frac{x_l}{Q(\tau_l)}$, 
\begin{align*}
=&\sum_{l=1}^Lx_l(\bm{\theta})\ln \frac{x_l(\bm{\theta})}{Q(\tau_l)} + \sum_{k=1}^K\theta_kh_k(\bm{x}(\bm{\theta})) - \sum_{k=1}^K|\theta_k|\epsilon_k\\
=&\sum_{l=1}^Lx_l(\bm{\theta})\ln \frac{x_l(\bm{\theta})}{Q(\tau_l)} + \sum_{k=1}^K\theta_k(\hat{s}_k - \sum_{l=1}^L x_l(\bm{\theta})s_k^{\tau_l}) - \sum_{k=1}^K|\theta_k|\epsilon_k, \quad\text{by definition of }h_k(\cdot) \text{ in } (\ref{alternative-obj}),\\
=&\sum_{l=1}^Lx_l(\bm{\theta})(\sum_{k=1}^K\theta_ks_k^{\tau_l} -\ln Z(\bm{\theta}))  + \sum_{k=1}^K\theta_k(\hat{s}_k - \sum_{l=1}^L x_l(\bm{\theta})s_k^{\tau_l}) - \sum_{k=1}^K|\theta_k|\epsilon_k,\quad\text{by definition of }x_l(\bm{\theta}),\\
=&-\ln Z(\bm{\theta})  + \sum_{k=1}^K\theta_k \hat{s}_k - \sum_{k=1}^K|\theta_k|\epsilon_k\\
=&g(\bm{\theta}).
\end{align*}
This finishes the proof that $\bm{\theta}^*$ defined right above (\ref{theta-eq}) is the solution to $(\ref{surrogate-obj})$. 

\subsection{Algorithm for Solving the Surrogate Objective (\ref{surrogate-obj})}
\label{algo}
RE-IRL applies stochastic gradient ascent to solve the surrogate objective (\ref{surrogate-obj}) for $\bm{\theta}^*$. Now, we derive the update term. From the definition of $g(\bm{\theta})$ in (\ref{surrogate-obj}), 
\begin{align*}
\nabla_{\theta_k} g(\bm{\theta}) &= \hat{s}_k - \frac{\nabla_{\theta_k}Z(\bm{\theta})}{Z(\bm{\theta})} - a_k\epsilon_k,\quad \text{where }a_k=1\text{ if }\theta_k\geq 0, \;a_k=-1\text{ if }\theta_k<0,\\
&= \hat{s}_k - \frac{\sum_{\tau\in\mathcal{T}}Q(\tau)s_k^{\tau}e^{\sum_{j=1}^K\theta_js_j^\tau}}{Z(\bm{\theta})} - a_k\epsilon_k.
\end{align*}
Since the transition probability $p$ is unknown but required to compute $Q$, we simplify the expression further to eliminate $p$. Denote our dataset as $\mathcal{T}_N:=\{\tau_n\}_{n=1}^N$, which assumed to be generated by the aggregate institutional investor's strategy $\pi^*$ and hence are samples from the trajectory distribution $P_{\bm{\theta}^*}$. Then, we use importance sampling to circumvent the unknown transition probability $\{p(\bm{s}'|\bm{s,a})\}_{\bm{a}\in\mathcal{A},\bm{s,s}'\in\mathcal{S}}$. Specifically,
\begin{align*}
\text{above}  &= \hat{s}_k - \sum_{\tau\in\mathcal{T}}P_{Q,\bm{\theta}}(\tau)s_k^{\tau} - a_k\epsilon_k,\quad \text{where } P_{Q,\bm{\theta}}(\tau):= \frac{Q(\tau)e^{\sum_{j=1}^K\theta_js_j^\tau}}{Z(\bm{\theta})}.\\
 &= \hat{s}_k - \sum_{\tau\in\mathcal{T}}P_{\bm{\theta}^*}(\tau)\frac{P_{Q,\bm{\theta}}(\tau)}{P_{\bm{\theta}^*}(\tau)}s_k^{\tau} - a_k\epsilon_k\\
 &= \hat{s}_k - \mathbb{E}_{\tau\sim P_{\bm{\theta}^*}}\left[\frac{P_{Q,\bm{\theta}}(\tau)}{P_{\bm{\theta}^*}(\tau)}s_k^{\tau}\right] - a_k\epsilon_k\\
 &\approx \hat{s}_k - \frac{1}{N}\sum_{\tau\in\mathcal{T}_N}\frac{P_{Q,\bm{\theta}}(\tau)}{P_{\bm{\theta}^*}(\tau)}s_k^{\tau} - a_k\epsilon_k,\quad \text{since } \mathcal{T}_N\text{ is sampled from }P_{\bm{\theta}^*},\\
&=\hat{s}_k - \frac{1}{N}\sum_{\tau\in\mathcal{T}_N}\frac{Q(\tau)}{P_{\bm{\theta}^*}(\tau)}\frac{e^{\sum_{j=1}^K\theta_js_j^\tau}}{Z(\bm{\theta})}s_k^{\tau} - a_k\epsilon_k\\
&=\hat{s}_k - \frac{1}{N}\sum_{\tau\in\mathcal{T}_N}\frac{\pi_{Q}(\tau)}{\pi^*(\tau)}\frac{e^{\sum_{j=1}^K\theta_js_j^\tau}}{Z(\bm{\theta})}s_k^{\tau} - a_k\epsilon_k,
\end{align*}
where $\pi_Q(\tau):=\Pi_{t=0}^{H-1} \pi_Q(a_t|s_t)$, $\pi^*(\tau):=\Pi_{t=0}^{H-1} \pi^*(a_t|s_t)$, $\tau=(s_0,a_0,s_1,a_1,...,s_H)$, and the last equality holds because the state-transition probabilities are cancelled. 

In sum, we have an unbiased sample estimate of $\nabla_{\theta_k}g(\theta)$:
\begin{equation}
\label{gradient}
\hat{\nabla}_{\theta_k} g(\bm{\theta}) = \hat{s}_k - \frac{1}{N}\frac{1}{Z(\bm{\theta})}\sum_{\tau\in\mathcal{T}_N}\frac{\pi_{Q}(\tau)}{\pi^*(\tau)}s_k^{\tau}e^{\sum_{j=1}^K\theta_js_j^\tau} - a_k\epsilon_k,
\end{equation}
where $Z(\bm{\theta})=\sum_{\tau\in\mathcal{T}}Q(\tau)e^{\sum_{k=1}^K\theta_k s_k^{\tau}}$ is as defined in (\ref{surrogate-obj}), $\pi_Q$ is the uniform policy that generates the uniform $\tau$-probability $Q$, and $\pi^*$ is the aggregate institutional policy with which the trajectory samples in $\mathcal{T}_N$ are generated.

\subsection{Algorithm in Practice} To estimate $\bm{\theta}^*$, we iteratively perform the following update
$$ \bm{\theta}_{t+1} = \bm{\theta}_t + \alpha \hat{\nabla}_{\bm{\theta}} g(\bm{\theta}_t),$$
where $\alpha=0.001$ is the learning step size and the entries of $\hat{\nabla}_{\bm{\theta}} g(\bm{\theta}_t)$ are computed according to the formula (\ref{gradient}). The details for computing terms in (\ref{gradient}) are given below.
\begin{itemize}
\item The state $\bm{s}=[s_1,s_2,...,s_K]\in \mathbb{R}^K$ comprises of the security return, the market return, the firm characteristics, and so on.
\item For any asset, the action $a_t$ taken at time $t$ is defined as the change in its hold ratio $h_{t+1}-h_t$, where $h_t$ represents the proportion of market value held by the institutional investors at time $t$. It is then discretized to have a value in $\{-3,-2,...,3\}$ using the method specified in Appendix \ref{sec:act-discretize}.

\item $s_k^{\tau}:=\sum_{t=0}^{H} s_{t,k}$, where $s_{t,k}$ denotes the $k^{th}$ entry of $\bm{s}_t$ in the trajectory $\tau$.
\item $\hat{s}_k:=\sum_{n=1}^N s_k^{\tau_n}/N$ is the sample mean.
\item $\mathcal{T}_N=\{\tau_n\}_{n=1}^N$ is the subset of observed trajectories with equal length $H$.
\item $\pi_Q(\tau):=\Pi_{t=0}^{H-1} \pi_Q(a_t|s_t)$, where $\pi_Q(a_t|s_t)\equiv1/|\mathcal{A}|=1/5$ is taken as a uniform policy and $|\mathcal{A}|$ denotes the number of values in $\mathcal{A}$.
\item $\pi^*(\tau):=\Pi_{t=0}^{H-1} \pi^*(a_t|s_t)$ is the investor's policy, and it is estimated using samples in the observed trajectories $\mathcal{T}_N$. We estimate the decision policie $\pi^*(a|s)$ using a nonparametric $K$-nearest neighbor approach (KNN). For each state $s$, we identify the $K$ most similar historical observations based on a Mahalanobis distance of the state $s$. The conditional choice probability $\pi^*(a|s)$ is then estimated by local frequencies among these $K$ neighboring observations. This procedure yields a flexible, model-free estimate of $\pi^*(a|s)$ that avoids parametric assumptions. More details can be found in our appendix.

\item $Z(\bm{\theta})=\mathbb{E}_{\tau\sim Q}[e^{\sum_{k=1}^K\theta_k s_k^{\tau}}]=\mathbb{E}_{\tau\sim P_{\pi^*}}[\frac{\pi_Q(\tau)}{\pi^*(\tau)}e^{\sum_{k=1}^K\theta_k s_k^{\tau}}]\approx \frac{1}{N}\sum_{\tau\in\mathcal{T}_N}\frac{\pi_Q(\tau)}{\pi^*(\tau)}e^{\sum_{k=1}^K\theta_k s_k^{\tau}}$.
\end{itemize}

\section{Statistical Test}
\label{sec-stat-test}
Our data consist of trajectories of different length $H$ and we group them based on $H$. For each value of $H$, we estimate $\bm{\theta}^*_H$ using the method stated in this chapter. Then, we perform a $t$-test on $\{\bm{\theta}^*_H\}_{H=8}^{47}$ to see whether each component of $\theta$ is significantly positive or negative (the weight being the number of trajectories of length $H$).

To test whether the stocks with higher reward function values receives more institutional investment. In the test samples (e.g., all the data samples in two quarters), we regress institutional holdings change on the reward function value, and see if the regression coefficient is positively significant.

\section{Conclusion}
\label{conclude}
This study establishes a rigorous methodological framework for recovering latent investor preferences by adapting Relative Entropy Inverse Reinforcement Learning to the data-driven constraints of financial markets. By addressing the challenges of data sparsity and sampling complexity through $K$-nearest neighbor estimation and trajectory optimization, we provide a flexible alternative to traditional utility-based asset pricing models. Ultimately, our proposed statistical testing method offers a necessary foundation for validating the reliability of recovered reward functions across different market regimes.

\bibliographystyle{plain}
\bibliography{Literature}

\section{Appendix}

\subsection{Nonparametric Estimation of Conditional Action Probabilities}
\label{sec:knn_policy}

This section describes our nonparametric procedure for estimating the aggregate institutional investor's conditional decision rule $\pi(a|s)$. Let $s_{it}\in\mathbb{R}^d$ denote the vector of firm characteristics for firm $i$ at time $t$, and let $a_{it}\in\mathcal{A}$ denote the discretized adjustment in holdings, where $\mathcal{A}=\{-3,-2,-1,0,1,2,3\}$. Our objective is to estimate the conditional choice probabilities
\begin{equation}
\pi(a\mid s)=\mathbb{P}(a_{it}=a\mid s_{it}=s),
\end{equation}
which characterize the institutional investor's empirical decision policies.

Rather than imposing a parametric structure, we adopt a local nonparametric estimator based on the $K$-nearest neighbor (KNN) method. This approach allows us to flexibly recover decision rules from data while remaining robust to model misspecification.

Nonparametric nearest-neighbor methods have been widely studied in statistics and machine learning and are known to provide consistent estimators of conditional distributions under weak regularity conditions \cite{biau2015lectures}. Related local estimation techniques have been applied in inverse reinforcement learning and dynamic decision problems \cite{levine2011nonlinear}.

\subsubsection{Distance Metric}
To measure similarity between firm states, we employ the Mahalanobis distance, which accounts for correlations among characteristics. Let $\Sigma_t$ denote the covariance matrix of firm characteristics estimated using all observations up to time $t$. We define
\begin{equation}
\Sigma_t
=
\mathrm{Cov}(s_{i\tau}),\quad \tau\le t.
\end{equation}
To ensure numerical stability, we apply ridge regularization,
\begin{equation}
\widetilde{\Sigma}_t
=
\Sigma_t+\lambda I_d,
\end{equation}
where $\lambda>0$ is a small constant. The corresponding inverse covariance matrix is
\begin{equation}
\Omega_t=\widetilde{\Sigma}_t^{-1}.
\end{equation}

For two states $s$ and $s'$, let $\mathcal{J}(s,s')$ denote the index set of characteristics observed for both vectors. The NaN-robust Mahalanobis distance is defined as
\begin{equation}
d_t(s,s')
=
\sqrt{
(s_{\mathcal{J}}-s'_{\mathcal{J}})^\top
\Omega_{t,\mathcal{J}\mathcal{J}}
(s_{\mathcal{J}}-s'_{\mathcal{J}})
},
\end{equation}
where $\Omega_{t,\mathcal{J}\mathcal{J}}$ denotes the submatrix of $\Omega_t$ corresponding to indices in $\mathcal{J}$. Distances are computed only when $|\mathcal{J}|\ge m$, where $m$ is a minimum overlap threshold.

\subsubsection{Local Neighborhood Construction}
For each query state $s_{it}$ at time $t$, we construct a historical pool
\begin{equation}
\mathcal{D}_t
=
\{(s_{j\tau},a_{j\tau}):\tau\le t\}.
\end{equation}
We compute distances $\{d_t(s_{it},s_{j\tau})\}$ for all $(j,\tau)\in\mathcal{D}_t$ and select the $K$ closest observations:
\begin{equation}
\mathcal{N}_K(s_{it})
=
\arg\min_{(j,\tau)\in\mathcal{D}_t}^{(K)}
d_t(s_{it},s_{j\tau}),
\end{equation}
where $\arg\min^{(K)}$ denotes the indices of the $K$ smallest distances. If fewer than $K$ valid neighbors are available, the observation is excluded from estimation.

\subsubsection{Estimation of Conditional Choice Probabilities}

Let $\mathcal{N}_K(s_{it})$ denote the neighborhood of $s_{it}$. The empirical conditional probability is estimated by local frequencies:
\begin{equation}
\widehat{\pi}_t(a\mid s_{it})
=
\frac{1}{K}
\sum_{(j,\tau)\in\mathcal{N}_K(s_{it})}
\mathbf{1}\{a_{j\tau}=a\},
\quad a\in\mathcal{A}.
\end{equation}
This estimator corresponds to a KNN conditional density estimator and converges to the true policy under standard regularity conditions \cite{biau2015lectures}.

\subsubsection{Probability Smoothing}

To ensure strictly positive probabilities and numerical stability in subsequent likelihood-based estimation, we apply Laplace smoothing:
\begin{equation}
\widetilde{\pi}_t(a\mid s_{it})
=
(1-\varepsilon)\widehat{\pi}_t(a\mid s_{it})
+
\frac{\varepsilon}{|\mathcal{A}|},
\end{equation}
where $\varepsilon>0$ is a small constant. This adjustment prevents zero-probability events and ensures that $\log\widetilde{\pi}_t(a\mid s_{it})$ is well-defined.

\subsubsection{Rolling Estimation Procedure}

To avoid look-ahead bias, all quantities are estimated recursively.

For each time $t$:
\begin{enumerate}
    \item Construct $\Sigma_t$ using data up to $t$;
    \item Compute $\Omega_t$;
    \item Form the historical pool $\mathcal{D}_t$;
    \item Estimate $\widetilde{\pi}_t(a\mid s_{it})$ for all observations at $t$.
\end{enumerate}

This rolling procedure yields a sequence of time-varying policy estimates that respect the information structure faced by firms.

\subsection{Appendix: Implementation Details}
\label{app:knn}

\subsubsection{Action Discretization}
\label{sec:act-discretize}
Continuous holding adjustments $\Delta h_{it}$ are discretized using cross-sectional quantiles. Let $\{x_{it}\}$ denote nonnegative observations in $\{h_{it}\}_{i=1}^N$ at time $t$. Let $p_1$, $p_2$, and $p_3$ denote the $1st$, $30th$, and $70th$ percentile of $\{x_{it}\}$, respectively. The discretized action corresponding to $\{x_{it}\}$ is defined as
\begin{equation}
a_{it}
=
\begin{cases}
0, & 0\le x_{it}<p_1,\\
1, & p_1\le x_{it}<p_2,\\
2, & p_2\le x_{it}<p_3,\\
3, & x_{it}\ge p_3.
\end{cases}
\end{equation}
The discretized action $a_{it}$ for the negative values are constructed similarly. Let $\{y_{it}\}$ denote nonnegative observations in $\{h_{it}\}_{i=1}^N$ at time $t$. Let $p_1$, $p_2$, and $p_3$ denote the $1st$, $30th$, and $70th$ percentile of $\{|y_{it}|\}$, respectively. The discretized action corresponding to $\{y_{it}\}$ is defined as
\begin{equation}
a_{it}
=
\begin{cases}
0, & 0\le |y_{it}|<p_1,\\
-1, & p_1\le |y_{it}|<p_2,\\
-2, & p_2\le |y_{it}|<p_3,\\
-3, & |y_{it}|\ge p_3.
\end{cases}
\end{equation}

\subsubsection{Covariance Estimation with Missing Data}

Let $S_t\in\mathbb{R}^{N_t\times d}$ denote the matrix of characteristics up to time $t$.

For each pair $(k,\ell)$, the covariance is computed using pairwise deletion:
\begin{equation}
\Sigma_{t,k\ell}
=
\frac{1}{|\mathcal{I}_{k\ell}|}
\sum_{i\in\mathcal{I}_{k\ell}}
(s_{ik}-\bar{s}_k)(s_{i\ell}-\bar{s}_\ell),
\end{equation}
where $\mathcal{I}_{k\ell}$ indexes observations for which both components are observed.

\subsubsection{Distance Computation}

For each query state $s_{it}$ and pool state $s_{j\tau}$:
\begin{enumerate}
    \item Identify common observed indices $\mathcal{J}$;
    \item If $|\mathcal{J}|<m$, set $d_t=\infty$;
    \item Otherwise compute $d_t$ using Equation (6).
\end{enumerate}

\subsection{Algorithm}
The complete estimation algorithm is summarized below.
\begin{algorithm}[H]
\caption{KNN Policy Estimation}
\begin{algorithmic}[1]
\FOR{$t=t_0,\dots,T$}
\STATE Estimate $\Sigma_t$ and $\Omega_t$
\STATE Construct $\mathcal{D}_t$
\FOR{each observation $i$ at $t$}
\STATE Compute $\{d_t(s_{it},s_{j\tau})\}$
\STATE Select $\mathcal{N}_K(s_{it})$
\STATE Compute $\widehat{\pi}_t(a\mid s_{it})$
\STATE Apply smoothing
\ENDFOR
\ENDFOR
\end{algorithmic}
\end{algorithm}

\end{document}